\documentclass{article}
\usepackage[preprint]{log_2022}

\usepackage{booktabs}           
\usepackage{multirow}           
\usepackage{amsfonts}           
\usepackage{amsthm}             
\usepackage{amsmath}            
\usepackage{amssymb}
\usepackage{microtype}
\usepackage{mathtools}          
\usepackage{graphicx}           
\usepackage{subcaption}         
\usepackage{xspace}             
\usepackage{enumitem}           
\usepackage{tikz}               
\usepackage[linesnumbered,noline,ruled]{algorithm2e}    
\usepackage[colorinlistoftodos,prependcaption, textsize=small%
]{todonotes} 
\usepackage[backref=page]{hyperref} 
\usepackage[noabbrev,nameinlink]{cleveref} 
\usepackage[numbers]{natbib} 
\usepackage{array}
\usepackage{adjustbox}
\usepackage{threeparttable}
\usepackage{comment}

\newtheorem{theorem}[algocf]{Theorem}
\newtheorem{lemma}[algocf]{Lemma}
\newtheorem{example}[algocf]{Example}
\theoremstyle{definition}
\newtheorem{definition}[algocf]{Definition}

\usetikzlibrary{shapes,fit}

\makeatletter 
    \tikzstyle{notestyleraw} = [
        draw=\@todonotes@currentbordercolor,
        fill=\@todonotes@currentbackgroundcolor,
        text=\@todonotes@currenttextcolor,
        rectangle,
        line width=0.5pt,
        text width = \@todonotes@textwidth - 1.6 ex - 1pt,
        rounded corners=4pt]
\makeatletter
 
\tikzset{every path/.style={very thick,shorten > =1pt, shorten < =1pt}, 
        every node/.style={circle,
        draw=rwth-blue, fill=rwth-llblue, stroke=thick, minimum size=20pt},
        every loop/.style={looseness=20}}

\newcommand\addvmargin[1]{
  \node[rectangle,fit=(current bounding box),inner ysep=#1,inner xsep=0, draw=none,fill=none]{};
}


\AtBeginEnvironment{algorithm}{}

\setlength\algomargin{1.5em}

\definecolor{rwth-blue}{cmyk}{1,.5,0,0}\colorlet{rwth-lblue}{rwth-blue!50}\colorlet{rwth-llblue}{rwth-blue!25}
\definecolor{rwth-violet}{cmyk}{.6,.6,0,0}\colorlet{rwth-lviolet}{rwth-violet!50}\colorlet{rwth-llviolet}{rwth-violet!25}
\definecolor{rwth-purple}{cmyk}{.7,1,.35,.15}\colorlet{rwth-lpurple}{rwth-purple!50}\colorlet{rwth-llpurple}{rwth-purple!25}
\definecolor{rwth-carmine}{cmyk}{.25,1,.7,.2}\colorlet{rwth-lcarmine}{rwth-carmine!50}\colorlet{rwth-llcarmine}{rwth-carmine!25}
\definecolor{rwth-red}{cmyk}{.15,1,1,0}\colorlet{rwth-lred}{rwth-red!50}\colorlet{rwth-llred}{rwth-red!25}
\definecolor{rwth-magenta}{cmyk}{0,1,.25,0}\colorlet{rwth-lmagenta}{rwth-magenta!50}\colorlet{rwth-llmagenta}{rwth-magenta!25}
\definecolor{rwth-orange}{cmyk}{0,.4,1,0}\colorlet{rwth-lorange}{rwth-orange!50}\colorlet{rwth-llorange}{rwth-orange!25}
\definecolor{rwth-yellow}{cmyk}{0,0,1,0}\colorlet{rwth-lyellow}{rwth-yellow!50}\colorlet{rwth-llyellow}{rwth-yellow!25}
\definecolor{rwth-grass}{cmyk}{.35,0,1,0}\colorlet{rwth-lgrass}{rwth-grass!50}\colorlet{rwth-llgrass}{rwth-grass!25}
\definecolor{rwth-green}{cmyk}{.7,0,1,0}\colorlet{rwth-lgreen}{rwth-green!50}\colorlet{rwth-llgreen}{rwth-green!25}
\definecolor{rwth-cyan}{cmyk}{1,0,.4,0}\colorlet{rwth-lcyan}{rwth-cyan!50}\colorlet{rwth-llcyan}{rwth-cyan!25}
\definecolor{rwth-teal}{cmyk}{1,.3,.5,.3}\colorlet{rwth-lteal}{rwth-teal!50}\colorlet{rwth-llteal}{rwth-teal!25}
\definecolor{rwth-gold}{cmyk}{.35,.46,.7,.35}
\definecolor{rwth-silver}{cmyk}{.39,.31,.32,.14}
\newcommand{\R}{\ensuremath{\mathbb{R}_{\neq0}}\xspace}

\newcommand{\N}{\ensuremath{\mathbb{N}}\xspace}
\newcommand{\B}{\ensuremath{\mathbb{B}}\xspace}

\renewcommand{\P}{\ensuremath{\textsf{P}}\xspace}

\renewcommand{\hom}{\textsf{\upshape hom}}
\newcommand{\Hom}{\textsf{\upshape Hom}}
\newcommand{\HomD}{\textsf{homD}}
\newcommand{\Emb}{\textsf{Emb}}
\newcommand{\emb}{\textsf{emb}}



 %

\newcommand{\CC}{\mathcal{C}}
\newcommand{\CF}{\mathcal{F}}

\newcommand{\CK}{\mathcal{K}}
\newcommand{\CO}{\mathcal{O}}
\newcommand{\CP}{\mathcal{P}}
\newcommand{\CT}{\mathcal{T}}

\newcommand{\CCr}{\mathcal{C}^*}

\newcommand{\CKr}{\mathcal{K}^*}

\newcommand{\set}[1]{\left\{ #1 \right\}}

%
%

%
%

%
%

%
%

\SetKwProg{Fn}{\texttt{def}}{\string:}{}
\SetKwFunction{Range}{range}
\SetKw{KwTo}{in}\SetKwFor{For}{for}{\string:}{}%
\SetKwIF{If}{ElseIf}{Else}{if}{:}{elif}{else:}{}%
\SetKwFor{While}{while}{:}{}%

\SetKwInput{Input}{Input}
\SetKwInput{Output}{Output}
\SetKwData{Stack}{Stack} 
\SetKwData{Set}{Set}
\SetKwFunction{pop}{pop}
\SetKwFunction{F}{F}

\SetCommentSty{commentfont}
\SetKwComment{Comment}{$\#$\ }{}

\newcommand{\subheader}{\noindent\textbf}

\title[Structural Node Embeddings with Homomorphism Counts]{Structural Node Embeddings with Homomorphism Counts}

\author[H. Wolf et al.]
{Hinrikus Wolf\thanks{Equal Contribution}\,, Luca Oeljeklaus\footnotemark[1]\, \thanks{Funded by the European Union (ERC, SymSim,
101054974). Views and opinions expressed are however those of the
author(s) only and do not necessarily reflect those of the European
Union or the European Research Council. Neither the European Union
nor the granting authority can be held responsible for them.}\,, Pascal Kühner \and Martin Grohe\footnotemark[2] \\
\institute{RWTH Aachen University \\ Ahornstra\ss e 55, 52074 Aachen}\\
\email{\{hinrikus,oeljeklaus,grohe\}@informatik.rwth-aachen.de}}

\begin{document}

\maketitle

\begin{abstract}
Graph homomorphism counts, first explored by Lovász in 1967, have recently garnered interest as a powerful tool in graph-based machine learning.
Grohe (PODS 2020) 
proposed the theoretical foundations for using homomorphism counts in machine learning on graph level as well as node level tasks.
By their very nature, these capture local structural information, which enables the creation of robust structural embeddings.
While a first approach for graph level tasks has been made by Nguyen and Maehara (ICML 2020), we experimentally show the effectiveness of homomorphism count based node embeddings.
Enriched with node labels, node weights, and edge weights, these offer an interpretable representation of graph data, allowing for enhanced explainability of machine learning models.

We propose a theoretical framework for isomorphism-invariant homomorphism count based embeddings which lend themselves to a wide variety of downstream tasks.
Our approach capitalises on the efficient computability of graph homomorphism counts for bounded treewidth graph classes, rendering it a practical solution for real-world applications.
We demonstrate their expressivity through experiments on benchmark datasets.
Although our results do not match the accuracy of state-of-the-art neural architectures, they are comparable to other advanced graph learning models. 
Remarkably, our approach demarcates itself by ensuring explainability for each individual feature.
By integrating interpretable machine learning algorithms like SVMs or Random Forests, we establish a seamless, end-to-end explainable pipeline.
Our study contributes to the advancement of graph-based techniques that offer both performance and interpretability.
\end{abstract}

\section{Introduction}
 
Finding isomorphism invariant node embeddings consisting of meaningful and explainable features is one major task in graph learning. 
A powerful concept initially explored by \citeauthor{lovasz1967operations} in \citeyear{lovasz1967operations} is the theory of graph homomorphisms \cite{lovasz1967operations}.
Given two graphs \(G,H\), the basic question with respect to graph homomorphism counts is,

\begin{center}
        \emph{How many mappings are there from \(H\) to \(G\) such that \\ adjacent vertices in \(H\) are mapped to adjacent vertices in \(G\)?}
\end{center}

The quantification of homomorphisms from a graph \(H\) onto another graph \(G\) has proven to capture meaningful structural information of \(G\).
Counting homomorphisms from a family of graphs onto graphs \(G\) and \(G'\) then allows for a structural comparison of \(G\) and \(G'\).
Selecting appropriate graph classes, we leverage homomorphism counts to obtain meaningful and explainable embeddings for graphs as well as nodes.
By this notion, every feature has a clear origin based on the structure of the underlying graph.
Hence, these features are wholly explainable.
Further, homomorphism-based embeddings are isomorphism invariant, as \citeauthor{lovasz1967operations}' pioneering work shows that two graphs disagree on the homomorphism count for some graph \(H\) if and only if they are non-isomorphic.

The homomorphism counting problem from arbitrary graphs is known to be \(\textsf{\#}\P\)-complete \cite{grohe2020word2vec}.
However, there exist efficient algorithms to compute the homomorphism counts for graphs of bounded treewidth \cite{dalmau2004complexity}, with treewidth being a notion of how ``tree-like'' a graph is \cite{robertson1986graph}.

The theoretical interest in counting homomorphisms arises from the insight that many natural properties and invariants on graphs can be characterised in terms of homomorphism counts---these form a natural basis towards a rich feature space for graphs and are sometimes referred to as graph motif parameters \cite{curticapean2017homomorphisms}. 
Let us just highlight two results here: homomorphism counts from trees characterise a graph up to Weisfeiler-Leman indistinguishability \cite{dvorak2010recognizing}, or equivalently, indistinguishability by graph neural networks (GNNs) \cite{morris2019weisfeiler,xu2019powerful}. 
Homomorphism counts from cycles characterise graphs up to co-spectrality (that is, having the same multiset of eigenvalues). 
This is interesting, because it shows that counting homomorphisms extracts spectral information, which is well known to be relevant in machine learning. 
Hence, we focus on homomorphism counts from trees and cycles, which intuitively corresponds to extracting a combination of GNN and spectral features.

As a graph learning technique, following \citeauthor{grohe2020word2vec}'s proposal, homomorphism-based embeddings have been investigated on graph-level tasks \cite{nguyen2020graph, beaujean2021graph}.
We lift the theory of homomorphism counts to structural node embeddings.
To facilitate this extension, we enhance the nomenclature established by \citeauthor{grohe2020word2vec} \cite{grohe2020word2vec}, 
adapting it to accommodate (multi-)feature graphs.
For such graphs, we adapt algorithms for graph embeddings to node embeddings.
In pursuit of an explainable framework, we put forth an end-to-end interpretable pipeline for node classification.
This cohesive pipeline encompasses both the generation of structural node embeddings and the subsequent classification of nodes through random forests.
Employing both empirical (Cora, Citeseer, OBGN-Arxiv) and synthetic (DGL-Cluster, DGL-Pattern) datasets, we showcase the practical applicability of our developed embeddings.
The experiments do not show state-of-the-art performance, however they are on par with other classical node embeddings algorithms \cite{grover2016node2vec} and standard GNN architectures \cite{hu2020open}.

\section{Related Work}

The theoretical suitability of homomorphisms counts for machine learning has been repeatedly investigated \cite{dell2018lovasz, grohe2020word2vec, boeker2023fine}.
Experimental studies were performed on graph level tasks by computing \emph{homomorphism convolutions} \cite{nguyen2020graph} and sampling homomorphism densities \cite{beaujean2021graph}.
There are also applications where message passing GNNs are enriched with homomorphism counts of different sized cliques \cite{barcelo2021graph} and with sampled homomorphism counts \cite{welke2023expectation}. 
Similar approaches to counting homomorphisms, where small substructures of graphs were counted and then used for machine learning, have been investigated before.
Examples for these are the graphlet kernel \cite{shervashidze2009efficient}, motif-counting approaches \cite{ribeiro2022survey} and many more \cite{kriege2020survey}.
In \cite{zhao2023learned}, a framework is proposed that uses machine learning for counting subgraphs.
Input features for GNNs have been enriched with random walk \emph{structural encodings} \cite{dwivedi2022graph} which are similar to homomorphism counts of cycles.

\subheader{Theory of Homomorphism Counts.}
Graph homomorphism counts, known since the 1960s \cite{lovasz1967operations}, have emerged as a powerful tool for capturing graph properties. In particular, homomorphism indistinguishability, referring to two graphs admitting the same homomorphism counts over specific graph classes, has been shown to describe a variety of equivalence relations on graphs. Several results connect homomorphism counts to various fields, including logic \cite{dvorak2010recognizing,grohe2017descriptive,fluck2023going}, algebraic graph theory \cite{dell2018lovasz}, quantum information theory \cite{mancinska2020quantum}, category theory \cite{dawar2021lovasz}, and convex optimisation \cite{grohe2022homomorphism,roberson2023lasserre}.

\subheader{Node embeddings and other approaches to graph learning.}
In recent years, quite a few node embedding algorithms have been proposed.
There exist approaches based on random walks e.g. node2vec \cite{grover2016node2vec}, spectral methods like HOPE \cite{ou2016asymmetric} as well as neural architectures such as SDNE \cite{wang2016structural} and GraphSage \cite{hamilton2017inductive}.
Over the years, several extensions to these algorithms have been proposed \cite{zhou2022network}. 
Node level tasks have been also tackled by various GNN architectures.
Graph Convolutional Neural Networks \cite{kipf2017semisupervised} as well as the more general family of message passing neural networks \cite{gilmer2017neural} have been applied to node classification, for example by the OGB project \cite{hu2020open}.
The basic methods have been extended to networks like GIN \cite{xu2019powerful}, GAT \cite{brody2022attentive} and more recently by incorporating transformer networks, e.g., GPS \cite{rampasek2022recipe}.
More architectures and standardised benchmarks have been collected and compared by the OGB project \cite{hu2020open}.

\section{Preliminaries}

\label{sec:preliminaries}
We denote the set of reals by $\mathbb R$ and the set of non-zero reals by $\R$.
We denote tuples \((x_1, \cdots, x_k)\) as \(\bar{x}\) and matrices as capital letters \(A,B, \dots\). We access elements by \(\bar{x}_i\) resp.~\(A_{ij}\) for appropriate indices.
Given $k_1,k_2 \in \N$ with \(k_1 \le k_2\), we write \([k_1,k_2]\coloneqq (k_1, k_1 +1, \dots, k_2)\) and abbreviate \([k_2] \coloneqq [1,k_2]\). 

We assume graphs to be undirected and denote the node and edge set of a graph \(G\) by \(V(G)\) and \(E(G)\), respectively. We usually assume that \(V(G)=[n]\), where \(n=|V(G)|\) is the \emph{order} of \(G\). 
Then, we can define the \emph{adjacency matrix} of \(G\) to be the matrix \(A^G \in \mathbb{R}^{n \times n}\) with entries \(A^G_{vw}=1\) if \((v,w)\in E(G)\) and \(A^G_{vw}=0\) otherwise. 
The nodes \(v\in V(G)\) may have \emph{node features} \(f_G(v)\in\mathbb R^d\), where \(d\ge 1\) is the \emph{feature dimension}. Both in theory and practice it will be important for us to only allow non-zero features or even positive features. 
In practice, we replace all zeroes by a small constant. 
We will discuss this issue in \Cref{sec:method} again. The default feature value will be \(1\). In particular, in a \emph{plain graph} without node features we assume that the feature dimension is \(1\) and \(f_G(v)=1\) for all \(v\in V(G)\).

A \emph{rooted graph} is a pair \((G,r)\) where \(G\) is a graph and \(r\in V(G)\) a vertex (called the \emph{root}).
We will consider various graph classes, most importantly the class 
\(\CC\) of \emph{cycles}, the class \(\CP\) of all \emph{paths}, and  the class \(\CT\) of all \emph{trees}. Many of the theoretical results extend to the class \(\CT_k\) of all graphs of tree width \(k\), for \(k\ge 1\) (see \cite{dell2018lovasz,dvorak2010recognizing}). 
We do not need this extension here, but is worth noting that \(\CC,\CP,\CT\subseteq\CT_2\), that is, cycles, paths and trees all have tree width at most \(2\).
We regard the classes \(\CC,\CP,\CT,\CT_k\) as classes of plain graphs. 
For a class \(\CK\) of plain graphs, we denote by \(\CK^\circ\) the class of all featured graphs with underlying plain graph in \(\CK\). 
By \(\CK^*\) and \(\CK^{\circledast}\) we denote the class of all rooted graphs \((G,v)\) with \(G\in\CK\) and \(G\in\CK^\circ\), respectively.

Our approach readily generalises to graphs which also have edge features (or weights), but for simplicity we focus on graphs with only node features here.

\section{Theoretical Foundation: Counting Homomorphisms}

\label{sec:theory}

A \emph{homomorphism} from a graph \(F\) to a graph \(G\) is a mapping \(\mu:V(F)\to V(G)\) that preserves edges, that is, \(\big(\mu(v),\mu(w)\big)\in E(G)\) for all \((v,w)\in E(F)\). 
To capture information on the local structure of the graph around a specific vertex \(v\), this definition can be extended naturally to rooted graphs: a \emph{homomorphism} from a rooted graph \((F,r)\) to a rooted graph \((G,v)\) is a homomorphism \(\mu\) from \(G\) to \(H\) with \(\mu(r)=v\). Since we are mainly interested in node embeddings and structural encodings in this paper, we focus on the rooted case in the following.

To review the basic theory, it is convenient to first ignore node features and focus on plain graphs (where \(f_G(v)=1\) for all \(v\in V(G)\)). 
We let \(\Hom(F,r,G,v)\) be the set of all homomorphisms from \((F,r)\) to \((G,v)\) and \(\hom(F,r,G,v)\coloneqq|\Hom(F,r,G,v)|\) the number of homomorphisms.
It is our goal to gather information about a graph \((G,v)\) from the homomorphism numbers \(\hom(F,r,G,v)\) for graphs \((F,r)\) from some family \(\CF\). In practice, the families \(\CF\) we consider are finite families of trees and cycles. In theory, they are usually infinite families. We state rooted versions of the most important theoretical results. These results are usually stated in the unrooted cases, but can easily be adapted (see \cite{grohe2020word2vec}).

\begin{theorem}[\cite{lovasz1967operations}]
\label{theo:graphs_isomorphism}
    For all connected graphs \(G,H\) and nodes \(v \in V(G), w \in V(H)\), the following are equivalent.
    \begin{enumerate}
        \item For all rooted graphs \((F,r)\), it holds that \(\hom(F,r,G,v) = \hom(F,r,H,w)\).
        \item There is an isomorphism \(\gamma: V(G) \to V(H)\) such that \(\gamma(v) = w\).
    \end{enumerate}
    In fact, if \(|V(G)|, |V(H)| \le n\), it suffices to consider graphs \(F\) of order at most \(n\).
\end{theorem}

  The following theorem refers to the Weisfeiler-Leman algorithm, which by now is well known in the graph learning community, and also includes its well-known relation to graph neural networks. We refer the reader to \cite{grohe2020word2vec,morris2021weisfeiler} for background.

\pagebreak

\begin{theorem}[\cite{dvorak2010recognizing,morris2019weisfeiler,xu2019powerful}]
\label{theo:weisfeiler_leman_homomorphisms}
    For all connected graphs \(G,H\) and nodes \(v \in V(G), w \in V(H)\), the following are equivalent.
    \begin{enumerate}
        \item For all rooted trees \((T,r)\in\CT^*\), it holds that \(\hom(T,r,G,v) = \hom(T,r,H,w)\).
        \item \((G,v),(H,w)\) cannot be distinguished by the \(1\)-dimensional Weisfeiler-Leman algorithm.
          \item \((G,v),(H,w)\) cannot be distinguished by a graph neural network.
    \end{enumerate}
    In fact, if \(|V(G)|, |V(H)| \le n\), it suffices to consider trees \(T\) of height at most \(n\).
 \end{theorem}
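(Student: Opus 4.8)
The plan is to prove the two nontrivial equivalences separately: the equivalence of (1) and (2) is the (rooted form of the) Dvořák correspondence between tree homomorphism counts and color refinement, while the equivalence of (2) and (3) is the Morris--Xu correspondence between $1$-WL and message-passing GNNs. Throughout I write $\chi_i(v)$ for the color assigned to $v$ after $i$ rounds of $1$-WL, recalling that $\chi_i(v)$ is obtained from $\chi_{i-1}(v)$ together with the \emph{multiset} of colors $\chi_{i-1}(u)$ over neighbors $u$ of $v$; since both $G$ and $H$ are finite this refinement stabilizes. The passage to the rooted setting is routine: every recursion below simply fixes the image of the root, so I will carry the roots along without further comment.

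The implication (2)$\Rightarrow$(1) is the easy direction and is handled by induction on the height of the tree, using the fact that tree homomorphism counts satisfy a recursion that mirrors the WL update exactly. If the root $r$ of $(T,r)$ has children $c_1,\dots,c_k$ carrying subtrees $T_1,\dots,T_k$, then because distinct subtrees meet only at $r$, a homomorphism sending $r$ to $v$ decomposes independently over the subtrees, giving
\[
\hom(T,r,G,v) = \prod_{i=1}^{k} \Bigl( \sum_{u \,:\, (v,u) \in E(G)} \hom(T_i, c_i, G, u) \Bigr).
\]
By the induction hypothesis each $\hom(T_i,c_i,G,u)$ depends only on $\chi_{h-1}(u)$, where $h = \mathrm{height}(T)$; since $\chi_h(v)$ determines the multiset of neighbor colors $\chi_{h-1}(u)$, each inner sum, and hence the whole product, is a function of $\chi_h(v)$ alone. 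Thus equal stable colors at $v$ and $w$ force equal homomorphism counts from every tree, which is (1).

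The implication (1)$\Rightarrow$(2) is the main obstacle, because it requires \emph{reconstructing} the WL color from the collection of homomorphism counts. I would argue via the depth-$i$ unfolding (computation) tree $U_i(v)$ rooted at $v$, together with two facts. First, $\chi_i(v)=\chi_i(w)$ if and only if $U_i(v)\cong U_i(w)$ as rooted trees, which is the standard characterization of color refinement by unfoldings. Second, any homomorphism into $(G,v)$ from a tree of height at most $i$ factors through the unfolding, so $\hom(T,r,G,v)=\hom(T,r,U_i(v),\varepsilon)$ for every such $T$ (here $\varepsilon$ is the root of the unfolding). The key remaining step is a Lovász-type injectivity result specialized to trees: a finite rooted tree is determined up to isomorphism by the homomorphism counts it receives from all rooted trees. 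This is where the real work lies, and I would obtain it either by restricting \Cref{theo:graphs_isomorphism} to the tree-to-tree regime or, more constructively, by exhibiting a triangular (hence invertible) change of basis between homomorphism counts and injective/embedding counts into trees. Combining the three facts, agreement of all tree homomorphism counts up to height $i$ forces $U_i(v)\cong U_i(w)$, hence $\chi_i(v)=\chi_i(w)$ for all $i$, which is (2).

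Finally, for (2)$\Leftrightarrow$(3) I would invoke the established GNN--WL correspondence in both directions. One direction is again an induction over layers: the output of any message-passing GNN at $v$ after $i$ layers is a function of $\chi_i(v)$, since each layer aggregates only the current feature and the multiset of neighbor features, so $1$-WL-equivalent nodes are never separated by any GNN. For the converse one uses a GNN with injective update and aggregation functions (as in GIN) that provably realizes the refinement $\chi_i$, so whenever $1$-WL distinguishes $(G,v)$ and $(H,w)$ some GNN does too. For the concluding height bound, I would note that on graphs of order at most $n$ the refinement stabilizes within $n-1$ rounds, and the unfolding $U_n(v)$ already captures the stable coloring; hence trees of height at most $n$ suffice, exactly as in the order bound of \Cref{theo:graphs_isomorphism}.
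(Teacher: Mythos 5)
The paper does not prove this theorem at all: it is imported verbatim from the cited literature, with only the remark in \Cref{sec:theory} that the unrooted statements ``can easily be adapted'' to the rooted setting. Your sketch is therefore not competing with an in-paper argument but reconstructing the cited one, and it does so essentially correctly: the product-of-sums recursion for rooted tree homomorphism counts mirrors the $1$-WL update and gives (2)$\Rightarrow$(1) by induction on height; the unfolding-tree characterisation of colour refinement plus a Lov\'asz-type injectivity statement for trees gives (1)$\Rightarrow$(2); and the standard injective-aggregation (GIN-style) argument gives (2)$\Leftrightarrow$(3). One caution on the crux you correctly isolate: of your two proposed routes to ``a finite rooted tree is determined by the homomorphism counts it receives from rooted trees,'' only the second closes cleanly. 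Restricting \Cref{theo:graphs_isomorphism} would require homomorphism counts from arbitrary rooted graphs $F$ of order up to $|V(U_i(v))|$, whereas your hypothesis only controls counts from trees of height at most $i$ into $U_i(v)$ (taller trees no longer factor through the depth-$i$ unfolding), so the heights and graph classes do not match up. The triangular change of basis works precisely because the target is a tree: in the partition-lattice expansion of $\hom(T,r,U,\rho)$ into embedding counts, only quotients $T/\pi$ that are themselves trees contribute, such quotients have height at most that of $T$, and taking $T=U_i(v)$ and $T=U_i(w)$ yields mutual embeddings and hence isomorphism using only trees of height at most $i$. The concluding height bound is also slightly glossed, since stabilisation must be measured on the disjoint union of $G$ and $H$ rather than on each graph separately, but that is a quantitative detail of a result the paper anyway only cites.
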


A generalisation of this theorem to graphs of treewidth \(k\), the \(k\)-dimensional Weisfeiler-Leman algorithm, and higher-order graph neural networks can be found in \cite{dvorak2010recognizing,morris2019weisfeiler}. 
Bounded treewidth graphs are of particular relevance here, as counting homomorphisms from a class \(\CK\) of graphs is in a polynomial time if and only if the class has bounded treewidth (under some complexity theoretic assumptions) \cite{dalmau2004complexity}. 
Thus if we want to compute \(\hom(F,r,G,v)\) efficiently, we should ensure that \(F\) has small treewidth.

A third result relevant to us relates homomorphism counts to spectral graph theory, specifically homomorphism counts from cycles to co-spectrality. The rooted version of this result is a little awkward, but let us state it anyway.

\begin{theorem}[Folklore]
    For all graphs \(G,H\), the following are equivalent:
    \begin{enumerate}
        \item \(G\), \(H\) are co-spectral, that is, their adjacency matrices have the same multisets of eigenvalues.
        \item For all rooted cycles \((C,r) \in \CCr\), it holds that \begin{center}
            \(\sum_{v \in V(G)} \hom(C,r,G,v) = \sum_{v \in V(H)} \hom(C,r,H,v)\).
        \end{center} 
    \end{enumerate}
\end{theorem}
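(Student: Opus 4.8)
The plan is to translate both conditions into statements about the \emph{power sums} $p_\ell \coloneqq \operatorname{tr}\big((A^G)^\ell\big) = \sum_{i} \lambda_i^\ell$ of the eigenvalues $\lambda_1,\dots,\lambda_n$ of $A^G$ (and analogously $q_\ell$ for $A^H$), and then to invoke the elementary theory of symmetric functions.

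The crucial first step is a counting identity. Fix the cycle $C$ of length $\ell$ and root it at an arbitrary vertex $r$. A homomorphism from $(C,r)$ to $(G,v)$ is precisely a closed walk of length $\ell$ in $G$ that starts and ends at $v$, and the number of such walks is the diagonal entry $\big((A^G)^\ell\big)_{vv}$. Summing over all $v \in V(G)$ yields
\[
  \sum_{v \in V(G)} \hom(C,r,G,v) \;=\; \operatorname{tr}\big((A^G)^\ell\big) \;=\; \sum_{i=1}^{n} \lambda_i^\ell \;=\; p_\ell .
\]
In particular the choice of root is immaterial to the sum, which is why it suffices to root each cycle once and arbitrarily. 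With this dictionary, condition (2) says exactly that $p_\ell = q_\ell$ for every $\ell \ge 3$ (cycles having length at least three).

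Given the identity, the implication $(1)\Rightarrow(2)$ is immediate: equal multisets of eigenvalues produce equal power sums for every $\ell$, hence equal cycle-homomorphism sums. For $(2)\Rightarrow(1)$ I would show that the power sums determine the eigenvalue multiset. Collecting terms by eigenvalue, the hypothesis $p_\ell = q_\ell$ becomes a single vanishing combination $\sum_{\alpha} e_\alpha\,\alpha^\ell = 0$ valid for all $\ell \ge 3$, where $\alpha$ ranges over the finitely many distinct \emph{nonzero} reals occurring as an eigenvalue of $G$ or of $H$, and $e_\alpha$ is the difference of its multiplicities in the two spectra. Evaluating at $\ell = 3,4,\dots$ and factoring out $\alpha^3$ gives a Vandermonde system in the distinct $\alpha$; its invertibility forces $e_\alpha\alpha^3 = 0$, hence $e_\alpha = 0$ for every $\alpha \ne 0$. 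Thus $G$ and $H$ share the same multiset of nonzero eigenvalues. (Equivalently, one recovers the minimal linear recurrence satisfied by $(p_\ell)_{\ell\ge 3}$; since its characteristic roots are the nonzero eigenvalues, its constant term is nonzero and the recurrence may be run backwards to reconstruct $p_2,p_1,p_0$.)

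The main obstacle, and the source of the ``awkwardness'' already flagged above, is that cycles have length at least three, so condition (2) gives no direct access to the low-order sums $p_0,p_1,p_2$. Here $p_1 = \operatorname{tr}(A^G)=0$ holds automatically because $G$ is loopless, but $p_0 = |V(G)|$ counts eigenvalues with multiplicity and is invisible to the sums $p_\ell$ with $\ell \ge 1$: adjoining an isolated vertex changes $p_0$ while leaving every $p_\ell$ (for $\ell\ge 1$) fixed. Hence the nonzero spectra of $G$ and $H$ always coincide under (2), whereas equality of the \emph{full} eigenvalue multisets---that is, matching the multiplicity of the eigenvalue $0$---additionally requires $|V(G)| = |V(H)|$. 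I would therefore either impose this order assumption or read ``co-spectral'' as equality of the nonzero spectra; I expect fixing this convention to be the only genuinely delicate point, the remainder being the standard closed-walk/power-sum correspondence.
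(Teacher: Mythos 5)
The paper offers no proof of this theorem at all---it is stated as folklore and left unproved---so there is nothing to compare line by line; your argument is the standard one and is essentially the only one available. The dictionary $\sum_{v}\hom(C_\ell,r,G,v)=\operatorname{tr}\bigl((A^G)^\ell\bigr)=p_\ell$ via closed walks is correct (and is implicitly the same identity the paper uses in its Lemma~\ref{lem:weighted_cycle}), the direction $(1)\Rightarrow(2)$ is immediate, and your Vandermonde argument for recovering the multiset of \emph{nonzero} eigenvalues from the power sums $p_\ell$, $\ell\ge 3$, is sound. Your flagged caveat is not a defect of your proof but a genuine issue with the statement as written: taking $G=K_2$ and $H=K_2$ together with an isolated vertex, both sides of condition~(2) equal $1+(-1)^\ell$ for every cycle length, yet the spectra $\{1,-1\}$ and $\{1,-1,0\}$ differ in the multiplicity of the eigenvalue $0$. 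So the equivalence requires either the hypothesis $|V(G)|=|V(H)|$ (which supplies $p_0$) or reading co-spectrality on the nonzero spectra; you are right to insist on fixing one of these conventions, and with that fix your proof is complete.
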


All these results hold for plain graphs. Let us extend the definitions and results to weighted graphs. Let \((F,r)\), \((G,v)\) be graphs of feature dimension \(d\). Then we define \(\hom(F,r,G,v)\in\mathbb R^d\) to be the vector with entries
\[
  \hom(F,r,G,v)_i \coloneqq \displaystyle\sum_{\mu \in \operatorname{Hom}(F,r,G,v)} \; \displaystyle\prod_{x \in V(F)} f_F(x)_i\cdot f_G(\mu(x))_i
\]
for \(i\in[d]\). Note that this is consistent with the previous definition for plain graphs: if \(d=1\) and \(f_F(x)=f_G(y)=1\) for all \(x\in V(F),y\in V(G)\) then \(\displaystyle\prod_{x \in V(F)} f_F(x)_i\cdot f_G(\mu(x))_i=1\), and \(\hom(F,r,G,v)\) is just the number of homomorphisms.

Remarkably, the fundamental \Cref{theo:graphs_isomorphism} does not extend from plain graphs to graphs with nontrivial features, as the following simple example (also illustrated in \Cref{fig:weighted_homomorphisms_break_lovasz}) shows. 

\begin{example} \label{ex:weighted_homomorphisms}
  Let \(G\) be the tree of height \(1\) with root \(\textcolor{rwth-carmine}{v_0}\) and two leaves \(v_1,v_2\). Let \(f_G\) be the 1-dimensional feature map defined by \(f_G(\textcolor{rwth-carmine}{v_0})\) and \(f_G(v_1)=f_G(v_2)=\frac{1}{2}\). Let \(H\) be the tree of height \(1\) with root \(\textcolor{rwth-lcarmine}{w_0}\) and one leaf \(w_1\) and features \(f_H(\textcolor{rwth-lcarmine}{w_0})=f_H(w_1)=1\). It is not hard to prove that for all rooted graphs \((F,r)\) it holds that \(\hom(F,r,G,\textcolor{rwth-carmine}{v_0})=\hom(F,r,H,\textcolor{rwth-lcarmine}{w_0})\).
\end{example}

\begin{figure}
    \centering
    \begin{tikzpicture}
            \node[label={left:\(f_G: 1\)}] (0) at (0,1.2) {\textcolor{rwth-carmine}{\(v_0\)}};
            \node[label={left:\(f_G: \frac{1}{2}\)}] (1) at (-1,0) {};
            \node[label={right:\(f_G: \frac{1}{2}\)}] (2) at (1,0) {};
        
            \node[label={right:\(f_H: 1\)}] (3) at (5,1.2) {\textcolor{rwth-lcarmine}{\(w_0\)}};
            \node[label={right:\(f_H: 1\)}] (4) at (5,0) {};

            \draw[-] (0) to[] (1);
            \draw[-] (0) to[] (2);
            \draw[-] (3) to[] (4);

        \end{tikzpicture}
    \caption{\Cref{ex:weighted_homomorphisms}, illustrated. It is readily verified that, for any rooted graph \((F,r)\), it holds that \(\hom(F,r,G,\textcolor{rwth-carmine}{v_0}) = \hom(F,r,H,\textcolor{rwth-lcarmine}{w_0})\). Nevertheless it holds that \(G \not \cong H\), contradicting the idea that \Cref{theo:graphs_isomorphism} might extend to weighted homomorphism counts.}
    \label{fig:weighted_homomorphisms_break_lovasz}
\end{figure}

There is a generalisation of the first part of \Cref{theo:weisfeiler_leman_homomorphisms}, the equivalence between homomorphism counts over trees and the Weisfeiler-Leman algorithm, but it requires an adaptation of the WL-algorithm to the weighted setting: instead of refining by the numbers of nodes of each colour, we need to refine by a weighted sum.

\section{Methodology}

\label{sec:method}

To obtain a homomorphism count embedding of a graph \(G\), the usual procedure consists in counting the element-wise number of homomorphisms from a fixed family of graphs \(\CK\) to \(G\), thus yielding an embedding of \(G\) into a latent space \(\mathbb{R}^{|\CK|}\).
To adapt this technique for node embeddings, we consider the graph family to consist of rooted graphs. 
For each graph therein, we map the root to the node we want to embed and count the number of homomorphisms that can still be realised.
As left-hand graphs, we consider graph families of small treewidth, in particular: rooted (binary) trees, rooted cycles and rooted paths.
As discussed above, it is possible to consider other graph classes, the only proviso being a bound on the treewidth.

\subsection{Embeddings}

\begin{figure}
    \centering
        \begin{subfigure}{0.29\textwidth}
            \begin{tikzpicture}
            \node[draw=none, fill=none] (dummy) at (0,-1.5) {};
            \node[] (a) at (0,0) {\(\textcolor{rwth-lcarmine}{v}\)};
            \node[] (b) at (1.5,0) {\(\phantom{a}\)};
            \node[] (c) at (3,0) {\(\textcolor{rwth-carmine}{v'}\)};
            \node[] (d) at (0,-1.5) {\(\phantom{a}\)};
            \node[] (e) at (1.5,-1.5) {\(\phantom{a}\)};
            \node[] (f) at (3,-1.5) {\(\phantom{a}\)};
            \node[] (g) at (3,1.5) {\(\phantom{a}\)};

            \draw[-] (a) to[] (b);
            \draw[-] (a) to[] (d);
            \draw[-] (b) to[] (c);
            \draw[-] (b) to[] (d);
            \draw[-] (b) to[] (e);
            \draw[-] (b) to[] (g);
            \draw[-] (c) to[] (e);
            \draw[-] (e) to[] (f);
        \end{tikzpicture}
        \caption{A graph \(G\) and two highlighted vertices \(\textcolor{rwth-lcarmine}{v}, \textcolor{rwth-carmine}{v'} \in V(G)\) .}
        \end{subfigure}
        \begin{subfigure}{0.70\textwidth}
            \begin{tabular}{ccc}
            \toprule
                \(H\) & \(\Hom(H,r,G,\textcolor{rwth-lcarmine}{v})\) & \(\Hom(H,r,G,\textcolor{rwth-carmine}{v'})\) \\ \midrule
                \begin{tikzpicture}[baseline=0]
                    \node[] (a) at (0,-0.4) {\(\phantom{a}\)};
                    \node[] (b) at (1.5,-0.4) {\(\phantom{a}\)};
                    \node[] (c) at (0.75,0.4) {\(r\)};
                    \draw[-] (a) to[] (b);
                    \draw[-] (a) to[] (c);
                    \draw[-] (b) to[] (c);
                    \addvmargin{1mm}
                \end{tikzpicture} & \(\textcolor{rwth-lcarmine}{2}\) & \(\textcolor{rwth-carmine}{2}\) \\ \midrule
                \begin{tikzpicture}[baseline=0]
                    \node[] (a) at (0,0) {\(r\)};
                    \node[] (b) at (1.25,0) {\(\phantom{a}\)};
                    \node[] (c) at (2.5,0) {\(\phantom{a}\)};
                    \draw[-] (a) to[] (b);
                    \draw[-] (b) to[] (c);
                    \addvmargin{1mm}
                \end{tikzpicture} & \(\textcolor{rwth-lcarmine}{7}\) & \(\textcolor{rwth-carmine}{8}\) \\
                \bottomrule
            \end{tabular}
            \begin{minipage}{.0cm}
            \vfill
            \end{minipage}
            \caption{Two rooted graphs \((C_3,r)\) and \((P_3,r)\). Both \(\textcolor{rwth-lcarmine}{v}\) and \(\textcolor{rwth-carmine}{v'}\) admit two rooted homomorphisms from \((C_3,r)\), but disagree on \((P_3,r)\).}
        \end{subfigure}
    \caption{Embedding of two vertices \(\textcolor{rwth-lcarmine}{v}\) and \(\textcolor{rwth-carmine}{v'}\) from a graph \(G\) over the set of rooted left-hand graphs \(\set{(C_3,r),(P_3,r)}\). Their embedding vectors are given by \(\emb^G_{((C_3,r),(P_3,r))}(\textcolor{rwth-lcarmine}{v}) = (\textcolor{rwth-lcarmine}{2},\textcolor{rwth-lcarmine}{7})\) and \(\emb^G_{((C_3,r),(P_3,r))}(\textcolor{rwth-carmine}{v'}) = (\textcolor{rwth-carmine}{2},\textcolor{rwth-carmine}{8})\), respectively.} 
    \label{fig:homomorphism_embedding}
\end{figure}

Given a family of rooted graphs \(\CKr\) and a graph \(G\), we denote an \(\CKr\)\emph{-homomorphism embedding} of a node \(v \in V(G)\) as \(\emb^G_{\CKr}(v) \in \mathbb{R}^{|\CKr|}\), which we define as
\[\emb^G_{\CKr}(v) \coloneqq (\Hom(H, r, G, v))_{(H,r) \in \CKr},\] 
that is, for every \((H,r) \in \CKr\), the number of homomorphisms from \((H,r)\) to \(G\) where the root \(r\) is mapped to \(v\).
This can be seen in \Cref{fig:homomorphism_embedding}.
In particular, we take into account node features as described in \Cref{sec:theory}.
Over all nodes of a graph \(G\), this yields an embedding \(\Emb^G_{\CKr} \in \mathbb{R}^{|V(G)| \times |\CKr|}\).

\begin{definition}[Homomorphism tensor embeddings]
    If \(G\) is a multi-featured graph \(G = (V(G),E(G),f^1_G,\dots,f^m_G)\), we adapt the procedure as follows.
    Compute, for every \(v \in V(G)\) and every \(G_1 \coloneqq (V(G),E(G),f^1_G), \dots, G_m \coloneqq (V(G),E(G),f^m_G)\), the embeddings \(\emb^{G_i}_{\CKr}(v)\) with \(i \in [m]\).
    The \emph{homomorphism tensor embedding} \(\emb^{G}_{\CKr}(v) \in \mathbb{R}^{|\CKr| \cdot m}\) is then defined as
    \[\emb^{G}_{\CKr}(v) \coloneqq \left(\emb^{G_1}_{\CKr}(v), \dots, \emb^{G_m}_{\CKr}(v)\right),\]
    which comes down to computing one set of weighted homomorphism counts for every feature.
\end{definition}
When using categorical features, zero-valued features have to be preprocessed, since a multiplication by \(0\) would result in information loss.
As discussed in \Cref{sec:preliminaries} a solution is to disturb zero-entries by some value \(\varepsilon\) close to zero.
Empirically, we discovered that \(\varepsilon = 0.01\) is a good candidate for this.

The overall embedding of the nodes of a graph is then of the form 
\(\Emb^G_{\CKr} \in \mathbb{R}^{|V(G)| \times (|\CKr| \cdot m)}\).
As left-hand graphs, we specifically consider these  graph classes:
\begin{itemize}
    \item[--] Trees of order at most 12. For each such tree, we select one vertex as the root.
    \item[--] Binary trees of order at most 12. These admit a unique vertex which works as a root.
    \item[--] Cycles of order at most 10. The selection of the root is unimportant.
    \item[--] Paths of order at most 10. We select as root an endpoint of the path.
\end{itemize}

\subheader{Variations.}
Through basic combinatorial arguments, it is not hard to see that homomorphism counts tend to grow exponentially on most reasonable left-hand graph classes.
This can pose challenges for certain machine learning algorithms like SVMs and neural networks, as they are sensitive to vector elements spanning a wide range of magnitudes.
Further, these numbers can outgrow the scope of fixed precision integer arithmetic.
A sound remedy to this exponential blow-up can be to scale the numbers logarithmically.
In preliminary experiments using logarithmically scaled homomorphism counts, we found a reasonable increase in classification accuracy when using SVMs.

Another solution to the exponential blow-up we explored is the notion of homomorphism densities.
Given two graphs \(G,H\) and vertices \(v \in V(G), r \in V(H)\), the \emph{homomorphism density} \(\HomD(H,r,G,v)\) is defined as
\[\HomD(H,r,G,v) \coloneqq \frac{\Hom(H,r,G,v)}{(|H|-1)^{|G|-1}}.\]
This corresponds to the share of mappings \(\mu\) from \(V(H)\) to \(V(G)\) maintaining \(\mu(r) = v\) which are homomorphisms.
To that end, \citeauthor{beaujean2021graph} \cite{beaujean2021graph} proposed to sample a number of mappings between graphs and verify for each whether it constitutes a homomorphism or not. 
They then obtain guarantees on the quality of the approximation through standard Chernoff bound arguments.
It is possible to adapt this homomorphism density sampling technique to a node-level setting, by fixing one element of the mapping.
However, this proved to be prohibitive due to the high number of samples required to obtain a reasonable approximation on the homomorphism density.

The embeddings can be enriched by adding vector based node features, using multiple families of graphs, or using homomorphism tensors.
Another variation is to compute the homomorphism counts of a whole graph without fixing a node.
This variation is suitable for graph-level tasks.
The application of this leaves the scope of this paper, however we performed preparatory experiments.
The results of these can be found in \Cref{apx:graph-tasks}.

\subsection{Algorithms}

In \Cref{lem:weighted_cycle} and \Cref{lem:weighted_path} we present the procedures we use to computed rooted cycle and path homomorphisms. \Cref{algo:compute_tree_homomorphisms} to compute tree homomorphism can be found in \Cref{apx:algo}.

\begin{lemma}[Weighted Cycle Homomorphism Counts]
\label{lem:weighted_cycle}
    Let \(G\) be a graph with adjacency matrix \(A^G\). Further, let \(W \in \mathbb{R}^{n \times n}\) be the feature matrix defined as
    \begin{align*}
        W_{vu} = \begin{cases}
            w_G(v) & \text{if } v = u, \\
            0 & \text{otherwise.}
        \end{cases}
    \end{align*}
    Then letting, \(M = (A^G \cdot W)\), we obtain for all \(v \in V(G)\) and \(k \ge 2\) that
    \[\Hom(C_k,1,G,v) = M^k_{vv}.\]
\end{lemma}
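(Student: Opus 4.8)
The plan is to unfold both sides and match them term by term, reducing the claim to the classical fact that powers of a (weighted) adjacency matrix count closed walks. Throughout I treat the scalar case, i.e.\ node weights $w_G(u)\in\mathbb R$; for feature dimension $d>1$ the same argument applies coordinatewise, since the embedding is computed per feature. Here $\Hom(C_k,1,G,v)$ is understood as the weighted homomorphism count $\hom(C_k,1,G,v)$ from \Cref{sec:theory}, and since the left-hand graph $C_k$ is plain (all its features equal $1$), that definition specialises to
\[
  \hom(C_k,1,G,v) \;=\; \sum_{\mu \in \Hom(C_k,1,G,v)}\ \prod_{x \in V(C_k)} w_G(\mu(x)).
\]
The first thing I would record is the entrywise shape of $M$: because $W$ is diagonal with $W_{uu}=w_G(u)$, matrix multiplication gives $M_{ab}=\sum_{c} A^G_{ac}W_{cb}=A^G_{ab}\,w_G(b)$, so every entry of $M$ carries exactly one copy of the weight of its \emph{target} (second) index.

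Next I would expand the $vv$-entry of the $k$-th power as a sum over intermediate indices,
\[
  M^k_{vv} \;=\; \sum_{a_1,\dots,a_{k-1}\in V(G)} M_{v a_1}M_{a_1 a_2}\cdots M_{a_{k-1} v},
\]
and substitute the entrywise formula. A single summand then factors as
\[
  \bigl(A^G_{v a_1}A^G_{a_1 a_2}\cdots A^G_{a_{k-1} v}\bigr)\cdot\bigl(w_G(a_1)\,w_G(a_2)\cdots w_G(a_{k-1})\,w_G(v)\bigr),
\]
where the adjacency bracket equals $1$ exactly when $(v,a_1,\dots,a_{k-1},v)$ is a closed walk of length $k$ in $G$ (every consecutive pair, including the wrap-around pair $a_{k-1},v$, is an edge) and $0$ otherwise. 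The key identification is that labelling the cycle vertices of $C_k$ as $1,\dots,k$ with root $1$, a homomorphism $\mu\in\Hom(C_k,1,G,v)$ is precisely a tuple $(\mu(1),\dots,\mu(k))=(v,a_1,\dots,a_{k-1})$ with consecutive images adjacent and $\mu(k)$ adjacent to $\mu(1)=v$, i.e.\ exactly such a closed walk. Under this bijection the weight product $\prod_{x\in V(C_k)}w_G(\mu(x))=w_G(v)\,w_G(a_1)\cdots w_G(a_{k-1})$ matches the weight bracket above, so summing over all closed walks yields $M^k_{vv}=\hom(C_k,1,G,v)$.

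The one place where a careless argument could go wrong — and hence the only real point to verify — is the bookkeeping of the weights: one must check that each of the $k$ vertices visited by the walk contributes its weight \emph{exactly once}. This is what pins down the definition $M=A^GW$ (weight on the target index) rather than some other placement, and it relies on the second indices of the $k$ factors ranging over $a_1,\dots,a_{k-1},v$, so that together with the closing factor $M_{a_{k-1}v}$ every vertex, including the root image $v$, is weighted precisely once. Beyond this I expect no substantial obstacle; the remainder is the routine closed-walk expansion, and the lower bound $k\ge 2$ merely ensures $M^k$ is a genuine product to which this expansion applies.
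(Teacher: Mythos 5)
Your proof is correct and is exactly the standard closed-walk expansion that the paper's one-line proof defers to (the paper simply cites a textbook reference on counting walks via powers of the adjacency matrix); your explicit bookkeeping of the weight factors, showing that each of the $k$ visited vertices contributes its weight exactly once because the weight always attaches to the second index of $M=A^G W$, is precisely what is needed to extend that standard argument to the weighted setting. No gaps.
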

\begin{proof}
    This follows from standard graph theoretic arguments, see e.g. \cite[section 6.10]{newman2010networks}.
\end{proof}
The same procedure can be applied to paths by using the row-sums as results. However, as matrix-matrix multiplication is notoriously computationally expensive, we work around this by, essentially, pulling the row-sums into the parentheses.

\begin{lemma}[Weighted Path Homomorphism Counts]
\label{lem:weighted_path}
    Let \(G\) be a graph with adjacency matrix \(A^G\). Further, let \(\bar{w}_v \in \R^{n}\) be the feature vector defined as \(\bar{w}_v = w_G(v)\) for all \(v \in V(G)\).
    Then, for all \(k \in \N\) and \(v \in V(G)\):
    \[\Hom(P_{k},1,G,v) = 
    \begin{cases}
        (A^G \cdot (\Hom(P_{k-1},1,G,u))_{u \in V(G)}) \cdot \bar{w}_v & \text{ if } k>1, \\
        \bar{w}_v &  \text{ if } k=1, \\
        \bar{0} &  \text{ if } k=0. \\
    \end{cases}\]
\end{lemma}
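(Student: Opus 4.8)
The plan is to verify the three cases of the piecewise formula directly; only the case $k>1$ carries content, and it follows from a one-step decomposition of a rooted path obtained by deleting its root. Throughout I write $h_k(v)\coloneqq\Hom(P_k,1,G,v)$, let $\bar h_{k-1}\coloneqq(h_{k-1}(u))_{u\in V(G)}$ be the column vector of counts, and read the claimed identity entrywise as $h_k(v)=(A^G\bar h_{k-1})_v\cdot\bar w_v$, where $\bar w_v=w_G(v)$. Since the left-hand path $P_k$ is a plain graph (feature $1$ at every vertex), the definition of the weighted count from \Cref{sec:theory} collapses to
\[
  h_k(v)=\sum_{\mu\in\Hom(P_k,1,G,v)}\ \prod_{x\in V(P_k)} f_G(\mu(x))=\sum_{\mu}\ \prod_{x\in V(P_k)} w_G(\mu(x)),
\]
so only the node weights of $G$ enter the product.

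First I would dispatch the two base cases. For $k=0$ the path has no vertices, and the empty sum yields $h_0=\bar0$. For $k=1$ the path is the single root vertex $1$; the only homomorphism into $(G,v)$ sends $1\mapsto v$, and the product over $V(P_1)$ is just $w_G(v)$, so $h_1(v)=\bar w_v$, as claimed.

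For the inductive step $k>1$, I would set up the central bijection. Labelling $V(P_k)=\{1,\dots,k\}$ with edges $\{j,j+1\}$ and root $1$, let $(P_{k-1},2)$ denote the subpath induced on $\{2,\dots,k\}$ rooted at $2$; the shift $j\mapsto j-1$ is a rooted isomorphism of this subpath with $(P_{k-1},1)$, so its count equals $h_{k-1}(\cdot)$. A homomorphism $\mu\colon(P_k,1)\to(G,v)$ is then determined by (i) the image $u\coloneqq\mu(2)$, which must satisfy $A^G_{vu}=1$, together with (ii) the restriction $\mu|_{\{2,\dots,k\}}$, which is exactly a homomorphism from $(P_{k-1},2)$ to $(G,u)$; this correspondence is a bijection because $\mu(1)=v$ is forced and the only edge of $P_k$ incident to vertex $1$ is $\{1,2\}$, so nothing further couples the root to the tail. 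The key bookkeeping step is to factor the weight product: vertex $1$ contributes $w_G(\mu(1))=w_G(v)=\bar w_v$, whence $\prod_{x\in V(P_k)}w_G(\mu(x))=\bar w_v\cdot\prod_{x\in V(P_{k-1})}w_G(\mu(x))$, and the second factor is precisely the summand defining $h_{k-1}(u)$.

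Summing over all homomorphisms and applying the bijection then gives
\[
  h_k(v)=\bar w_v\sum_{u\in V(G)}A^G_{vu}\,h_{k-1}(u)=\bar w_v\cdot(A^G\bar h_{k-1})_v,
\]
which is the stated formula. I expect the only real obstacle to be the weight accounting together with the clean identification of the tail product with $h_{k-1}(u)$ under the relabelling of $\{2,\dots,k\}$; once the bijection and the factoring of $\bar w_v$ are in place, the matrix--vector rewriting is immediate. The same decomposition, iterated, yields the closed form $\bar h_k=(WA^G)^{k-1}\bar w=W\,M^{k-1}\mathbf 1$ with $M=A^GW$ as in \Cref{lem:weighted_cycle}; this is exactly the ``pulling the row-sums into the parentheses'' remark, replacing the matrix powers of $M$ by a sequence of cheap matrix--vector products.
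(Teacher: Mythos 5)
Your proof is correct. The paper in fact states this lemma without a written proof (only the informal remark about ``pulling the row-sums into the parentheses''), and your argument --- induction via deleting the root of \(P_k\), factoring the root's weight \(w_G(v)\) out of the product, and identifying the tail with a rooted \((P_{k-1},1)\)-homomorphism into \((G,u)\) for neighbours \(u\) of \(v\) --- is exactly the argument that remark alludes to, with the closing observation \(\bar h_k = W(A^G W)^{k-1}\mathbf{1}\) correctly tying the recursion back to \Cref{lem:weighted_cycle}.
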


Letting \(k\) be the size of left-hand graph we consider, we shortly discuss the computational complexity of our implementation.
Computing weighted cycle homomorphism counts comes down to computing matrix powers, which yields a complexity of \(\CO(n^\omega\log(k))\), where \(\omega < 2.373\) \cite{alman2021refined}.
Computing weighted path homomorphism counts, in turn, has complexity \(\CO(kn^2)\).
Computing weighted tree homomorphism counts, using \Cref{algo:compute_tree_homomorphisms}, also has complexity \(\CO(k n^2)\) \cite{diaz2002counting}.


\section{Experiments}
\subsection{Setup and Implementation}

To demonstrate the theoretical concept of using weighted homomorphism counts for node embeddings, we conduct experiments on five standard benchmark datasets for node classification.
In the following, let \(n,m\), and \(c\) stand for the number of nodes, edges, and node classes of a dataset, respectively. 

\begin{description}
    \item[Cora \cite{mccallum2000automating}] is a citation network with \(n = 2708, m = 5429, c= 7\). Each node is equipped with a binary feature vector \(\bar{i} \in \B^{1433}\) corresponding to the occurrence (or not) of some word in the corresponding publication.
    \item[Citeseer \cite{giles1998citesser}] is a citation network with \(n = 3312, m = 4732, c= 6\). Each node is equipped with a binary feature vector \(\bar{i} \in \B^{3703}\) corresponding to the occurrence (or not) of some word in the corresponding publication.
    \item[OGBN-Arxiv \cite{hu2020open}] is a citation network with \(n = 169343, m = 1166243, c= 40\). Each node is equipped with a feature vector \(\bar{i} \in \R^{128}\) corresponding to a word2vec-embedding of the corresponding article's title and abstract. 
    \item[DGL-Cluster \cite{dwivedi2023benchmarking}] is a set of 12.000 graphs, each with \(n \in [40, 190], c = 6\) and containing 6 stochastic block model clusters which have to be detected. Each graph has a node feature which takes value 0 except for one node per community, which is assigned a value \(x \in [1,6]\).
    \item[DGL-Pattern \cite{dwivedi2023benchmarking}] is a set of 14.000 graphs, each with \(n \in [44, 188],c = 2\) and containing 5 communities created from stochastic block models with \(p = 0.5, q = 0.35\), as well as a ``pattern'' to be detected, that is, a sixth community with \(p = 0.5, q = 0.5\).
\end{description}
We chose Cora and Citeseer as they are reasonably small and thus suitable for our embeddings.
OGBN-Arxiv was selected as the smallest node property dataset from the OGB project, the standard benchmark for graph learning tasks.
Finally, the DGL datasets enable a comparison with \cite{barcelo2021graph}.

For each dataset, we obtain a range of node embeddings by computing homomorphism counts on families of rooted paths (\texttt{Paths}), cycles (\texttt{Cycles}), binary trees (\texttt{Binary Trees}), and trees (\texttt{Trees}) as left-hand graphs as well as the corresponding homomorphism tensors (\texttt{Tensor / T.}).
Additionally, we performed experiments in which we augment the embedding vectors by the raw node features (\texttt{Features}).
We evaluated the embeddings both individually as well as in the form of ensembles; we performed downstream tasks on the concatenation of multiple embeddings. For example, {\texttt{T. Trees + Cycles}} corresponds to performing an experiment over the node embedding consisting of the tensor tree homomorphism counts, and cycle homomorphism counts.

We evaluate our embeddings on node classification tasks. 
For OBGN-Arxiv, there exists a standard performance evaluator \cite{hu2020open}, which we use according to the guidelines; we perform 10 repetitions with different random seeds on a random forest classifier and report the averaged accuracy and standard deviation.
Note that OGBN-Arxiv has positive as well as negative features, which can lead to vanishing numbers; practically, this was unlikely due to the word2vec-embedding and it did indeed not occur.
For DGL-Cluster and DGL-Pattern, we concatenated the embeddings of the individual graphs and then used the standard experimental setup \cite{dwivedi2023benchmarking}; we also trained a random forest analogously to our own setup on the (\(K_3, K_4, K_5\))-homomorphisms counts provided in \cite{barcelo2021graph}.
On Cora and Citeseer, we perform 10-fold stratified cross-validation and report the averaged accuracy and standard deviation.
The implementation was done in python using \texttt{networkx}, \texttt{numpy}, and \texttt{scipy}. 
We use the random forest classifier implementation from \texttt{scikit-learn} with default parameters.

\subsection{Results}

\Cref{tab:main_result} and \Cref{tab:some_dgl_results} present the outcomes of our experimental trials alongside benchmark outcomes from existing literature. The best results from our experiments are indicated in bold typeface.
Our results show that the incorporation of node features significantly increases performance.
Notably, embeddings founded on cycle-based methods yield consistently strong outcomes. 
On Cora, \texttt{Tensor Cycles} displays the best performance, while on Citeseer, \texttt{Paths + Features} yields the best results.
However, the differences among the best results all fall within their respective standard deviations, inhibiting the identification of a clear best candidate.

Ensembles of embeddings do not enhance overall accuracy (see \Cref{tab:results_all} in \Cref{app:all}). 
On Cora and Citeseer, our outcomes are competitive with other advanced node embedding techniques.
Conversely, the OGBN-Arxiv dataset reveals subpar downstream performance across all embeddings.
Additionally, cycle-based embeddings encountered computational timeouts after 48 hours.
On DGL-Cluster and DGL-Pattern, we see mixed results.
While DGL-Pattern is not far off the neural baseline, our results on DGL-Cluster are lacking. 
Remarkably, the (\(K_3, K_4, K_5\))-homomorphism embedding \cite{barcelo2021graph} performs very poorly without a powerful GNN architecture.
The expressiveness of feature-inclusive embeddings remains evident, yet we lag behind in comparison to the baseline.

\begin{table}
\caption{Avg. accuracy of node classification over select embeddings (excerpt of \Cref{tab:results_all}).}
\label{tab:main_result}
\centering
\begin{threeparttable}
\begin{tabular}{>{\ttfamily}lccccc}

\toprule
\textrm{Embedding} / \textrm{Dataset} &             CITESEER &                 CORA &           OGBN-Arxiv \\
\midrule
Features                                        &  \(0.762 \pm 0.054\) &  \(0.779 \pm 0.025\) &  \(0.455 \pm 0.001\) \\
Binary Trees                                    &  \(0.497 \pm 0.086\) &  \(0.601 \pm 0.031\) &  \(0.315 \pm 0.001\) \\
Tensor Binary Trees                           &  \(0.753 \pm 0.063\) &  \(0.837 \pm 0.012\) &  \(\mathbf{0.551 \pm 0.001}\) \\
Cycles                                          &  \(0.461 \pm 0.078\) &  \(0.525 \pm 0.042\) &                 \tnote{*}  \\
Tensor Cycles                                 &  \(0.781 \pm 0.058\) &  \(\mathbf{0.859 \pm 0.018}\) &                  \tnote{*} \\
Paths                                           &  \(0.527 \pm 0.092\) &  \(0.597 \pm 0.024\) &    \(0.500 \pm 0.001\)  \\
Paths + Features                                &  \(\mathbf{0.788 \pm 0.061}\) &  \(0.792 \pm 0.026\) &                  \(0.501 \pm 0.001\)   \\
\midrule
GCN \cite{kipf2017semisupervised, hu2020open} &  \(0.679 \pm 0.005\) & \(0.801 \pm 0.005\) & \(0.717 \pm 0.030\) \\
GAT (+FLAG) \cite{velickovic2018graph, kong2022robust} & \(0.725 \pm 0.007\) & \(0.830 \pm 0.005\) & \(0.737 \pm 0.001\) \\
SSP \cite{izadi2020optimization} &\(0.805 \pm 0.001\) & \(0.902 \pm 0.006\) & -\\
node2vec \cite{hu2020open} & - & - & \(0.701 \pm 0.001\) \\
SimTeG+TAPE+RevGAT \cite{duan2023simteg} & - & - & \(0.780 \pm 0.000\) \\

\bottomrule
\end{tabular}
    \begin{tablenotes}
    \item[*] Cycle embeddings run into a timeout after 48 hours.
    \end{tablenotes}
\end{threeparttable}
\end{table}
\begin{table}[b]
    \centering
    \caption{Avg. weighted accuracy of node classification over select embeddings (excerpt of \Cref{tab:dgl_results}).}
    \label{tab:some_dgl_results}
    \begin{tabular}{>{\ttfamily}lcc}
\toprule
\textrm{Embedding} / \textrm{Dataset} & DGL-CLUSTER & DGL-PATTERN \\
\midrule
Binary Trees                & \(0.207 \pm 0.051\) & \(0.777 \pm 0.034\) \\
Binary Trees + Features     & \(0.209 \pm 0.033\) & \(0.777 \pm 0.011\) \\
Tensor Cycles               & \(\mathbf{0.459 \pm 0.053}\) & \(0.765 \pm 0.013\) \\
\midrule
Binary Trees + Cycles       & \(0.166 \pm 0.037\) & \(0.799 \pm 0.022\) \\
Binary Trees + T. Cycles    & \(0.444 \pm 0.095\) & \(\mathbf{0.800 \pm 0.017}\) \\
\midrule
GatedGCN+PE (16 L.)  \cite{dwivedi2023benchmarking}        & \(0.761 \pm 0.020\) & \(0.868 \pm 0.009\) \\
GatedGCN+PE + \(K_3, K_4,K_5\)  \cite{barcelo2021graph}    & \(0.740 \pm 0.019\) & \(0.856 \pm 0.033\) \\
RF on \(K_3, K_4,K_5\)                                     & \(0.143 \pm 0.001\) & \(0.334 \pm 0.005\) \\
\bottomrule
\end{tabular}
\end{table}

We provide the embedding computation times for each class/dataset combination in \Cref{tab:run-times} in \Cref{app:all}.
We show only the computation times with-out features, since these computations add up in the number of features.
However, the number of features will dominate this computation.
All experiments have been performed on an AMD EPYC 7302 processor.

\section{Discussion}
We present a theoretical framework for gaining meaningful and explainable node embeddings based on homomorphism counts.
These excel in capturing the structure of the underlying graphs.
The inclusion of features through tensor embeddings or the appending of feature vectors further enhances their expressivity.
Especially the effect of the tensor embeddings on the results is remarkable.

On the benchmark datasets, we see mixed performance by our models.
When comparing to the raw node features as baseline, it becomes apparent that the datasets vary in how much information resides in the structure.
Whereas Citeseer embeddings only showed marginal downstream improvement when homomorphism embeddings are used, Cora embeddings show drastically improved downstream accuracy on \texttt{Tensor Cycles} compared to the raw features.
Unfortunately, OGBN-Arxiv --even though it is the smallest graph from the OGB project for node classification-- is too large to compute high-performing embeddings.
This could be improved by increasing dimension, i.e., counting larger paths, trees, and so on.
However, preliminary experiments on smaller datasets suggest that increasing the size of the left-handed graphs has only marginal effects.
Independently, because of the sheer size of that graph, it becomes unfeasible to compute larger embeddings.
Another potential explanation could be that the random forest classifier struggled with the number of parameters.

Concerning the generated datasets, DGL-Pattern carries a lot of information within its structure; the reverse is the case for DGL-Cluster.
In DGL-Pattern, the ``pattern'' distinguishes itself in a structural way from the remaining graph, and the node features are just random noise. 
Hence, the results from simple and tensor embeddings differ only minimally and both are quite performant.
In DGL-Cluster on the other hand, the information contained in the structure is almost non-existent, as the communities are not distinguished in terms of structure. 
As such, the prediction accuracy based on simple homomorphism embeddings is close to the ratio of number of classes.
In contrast, the tensor embeddings are much more expressive, as they are able to capture information on the location of the node in question in terms of the 6 labelled nodes.
For the citation networks, it was in fact not clear at all that the local structure around a node should carry any meaningful information.
Contradicting the intuition, our results clearly underscore the presence of such information in the structure of these datasets and the importance of capturing it.

The embedding ensembles, in particular the combination of tree and cycle embeddings, did not generally improve accuracy.
The idea was to combine tree features (1-WL) with cycle features (2-WL, spectral information).
A boosting based on this was only observable on DGL-Pattern.

The tensor embeddings we introduce, while allowing us to capture more meaningful information, lead to a substantial overhead in our procedure, as we compute homomorphism counts for each label.
This effect is especially significant on the empirical datasets, since their nodes are equipped with sizeable feature vectors. 
To address this issue, we tried preprocessing the node features by means of a PCA, yielding a reduced set of labels for homomorphism count computation.
However, this decreased downstream performance.
Another idea, concerning tree embeddings, would be to sample trees up to a larger size to capture a wider radius, at the cost of potentially losing granularity. 

All in all, we find our homomorphism count based node embeddings capture a considerable amount of meaningful structural information; the introduction of tensor embeddings allows us to combine it with node-level information.


\section{Conclusion}

Graph homomorphism counts are a powerful tool for explainable node embeddings.
By their very nature, homomorphism counts extract local graph structure based on the left-hand graphs.
Unlike simple message passing neural networks, they are able to capture structural information beyond 1-WL.
While our embeddings do not show state-of-the-art downstream performance, they are reasonably competitive.
However, especially for large graphs, their computational costs are quite high.
Hence, we propose that homomorphism count based node embeddings on smaller graphs can complement the available structural encodings for state-of-the-art neural architectures.

Our main contribution is the introduction of node-level weighted homomorphism counts and subsequent tensor embeddings, which form a novel approach to include multi-featured homomorphism counts into node embeddings.
Combined with explainable learning algorithms, our setup allows for an end-to-end explainable machine learning framework for node-based tasks.
Therefore, in scenarios where interpretability is paramount, our methodology proves to be advantageous and well-suited.

The purpose of this paper is to demonstrate that homomorphism counts from a small family of simple graphs can extract meaningful features that may provide useful structural encodings for the nodes of a graph. In future work, we plan to port our algorithms to a GPU-based parallelised implementation, exploiting the matrix and tensor-based algorithms for counting homomorphisms from cycles, paths, and full binary trees.

\bibliographystyle{plainnat}
\bibliography{reference}

\newpage
\appendix

\section{Algorithm}
\label{apx:algo}

We compute rooted tree homomorphisms counts using \Cref{algo:compute_tree_homomorphisms}, an adaption from \cite{diaz2002counting}. On input of a tree \(T,r\), a graph \(G\) and a node feature \(w: V(G) \to \R\), it computes \(\hom(T,r,G,v)\) in parallel for all \(v \in V(G)\).
In the following we try and provide an intuitive description of the case where \(w(v) = 1\) for all \(v \in V(G)\).
For every vertex of the tree, it initialises an array of length \(|V(G)|\) with value 1 in every entry (the number of homomorphisms from the singleton graph to a node).
The algorithm then works along a postordering of the tree vertices.
When visiting a node \(t_n\), the entries of that node's parent node (\(t_p\)) are updated. Once all the children of a node \(t_p\) have been visited, \(t_p\)'s final vector entries correspond to the number of rooted homomorphisms of the subtree rooted in \(t_p\) to every vertex of \(G\).
The weighted case takes into account the node weights as necessary.

\begin{algorithm}[h!]
    \DontPrintSemicolon
    \caption{Weighted tree homomorphisms for every vertex of \(G\).}
    \label{algo:compute_tree_homomorphisms}
    \SetFuncSty{textbf}
    \SetKwFunction{rth}{rooted\_tree\_homomorphisms}
    \SetKwFunction{th}{tree\_homomorphisms}
    \SetKwFunction{dfs}{dfs\_postorder\_nodes}
    \SetKwFunction{random}{randomElement}
    \SetKwFunction{sumover}{sum}
    \Fn{\rth{\(T,r,G,w\)}}{
    \Input{\begin{tabular}{rl}
         \((T,r)\):& a rooted tree,  \\
         \(G\):&  a graph, \\
         \(w\):& a vertex feature.
    \end{tabular}}%
    \Output{\((\Hom(T,r,G,v))_{v \in V(G)}\)}%
    \BlankLine
    \For{\(t\) \KwTo \(V(T)\)}{%
        \(\text{predecessor}_t \coloneqq u \in N_T(t) \text{ with } \textbf{dist}_T(r,u) < \textbf{dist}_T(r,t)\) \;
        \(\bar{H}_t \coloneqq (\begin{array}{ccc}
         w(1) & \cdots & w(n) \\
    \end{array})\)\Comment*{This corresponds to the homomorphism counts from the rooted singleton graph for each vertex of \(G\)}
    }
    \For{\(t\) \KwTo \dfs{\(T,r\)} \(- \; r\)}
        {
        \(p \coloneqq \text{parent}_t\) \;
        \(\bar{U} \coloneqq A^G \cdot (\bar{H}_t \cdot w_{V(G)})\) \;
        \(\bar{H}_p \coloneqq \bar{H}_p \cdot \bar{U}\)
        }
    \Return \(\bar{H}_r\)\Comment*{Contains one value for every vertex in \(V(G)\)}
    }
    \BlankLine
    \Fn{\th{\(T,G,w\)}}{
    \(r\) = \random{\(V(G)\)} \;
    \Return \sumover{\rth{\(T,r,G,w\)}}
    }
\end{algorithm}

\newpage

\pagebreak
\section{All Results}
\label{app:all}
\begin{table}[h!]
\caption{Avg. accuracy of node classification over all embeddings.}
\label{tab:results_all}
\centering
\begin{threeparttable}
\begin{tabular}{>{\ttfamily}lccc}

\toprule
\textrm{Dataset} &             CITESEER &                 CORA &           OGBN-Arxiv \\
\textrm{Embedding}                                      &                      &                      &                      \\
\midrule
Features                                        &  \(0.762 \pm 0.054\) &  \(0.779 \pm 0.025\) &  \(0.455 \pm 0.001\) \\
Binary Trees                                    &  \(0.497 \pm 0.086\) &  \(0.601 \pm 0.031\) &  \(0.315 \pm 0.001\) \\
Binary Trees + Features                         &  \(0.727 \pm 0.054\) &  \(0.709 \pm 0.015\) &  \(0.489 \pm 0.001\) \\
Tensor Binary Trees                           &  \(0.753 \pm 0.063\) &  \(0.837 \pm 0.012\) &  \(\mathbf{0.551 \pm 0.001}\) \\
Cycles                                          &  \(0.461 \pm 0.078\) &  \(0.525 \pm 0.042\) &                 \tnote{*}  \\
Cycles + Features                               &   \(0.785 \pm 0.050\) &  \(0.779 \pm 0.021\) &                 \tnote{*}  \\
Tensor Cycles                                 &  \(0.781 \pm 0.058\) &  \(\mathbf{0.859 \pm 0.018}\) &                  \tnote{*} \\
 Trees                            &  \(0.491 \pm 0.086\) &  \(0.615 \pm 0.032\) &  \(0.307 \pm 0.001\) \\

 Trees + Features                 &  \(0.695 \pm 0.054\) &  \(0.683 \pm 0.025\) &                  \(0.473 \pm 0.001\) \\
 Tensor Trees                   &  \(0.751 \pm 0.065\) &  \(0.841 \pm 0.012\) &  \(0.546 \pm 0.001\) \\
Paths                                           &  \(0.527 \pm 0.092\) &  \(0.597 \pm 0.024\) &    \(0.500 \pm 0.001\)  \\
Paths + Features                                &  \(\mathbf{0.788 \pm 0.061}\) &  \(0.792 \pm 0.026\) &                  \(0.501 \pm 0.001\)   \\
Tensor Paths                                  &  \(0.785 \pm 0.053\) &   \(0.845 \pm 0.02\)  &  \(0.521 \pm 0.001\)  \\
\midrule
Binary Trees + Cycles                           &  \(0.511 \pm 0.087\) &  \(0.604 \pm 0.028\) &                  \tnote{*}  \\
Binary Trees + T. Cycles                  &  \(0.778 \pm 0.059\) &  \(0.855 \pm 0.017\) &                  \tnote{*} \\
T. Binary Trees + Cycles                  &  \(0.756 \pm 0.064\) &  \(0.841 \pm 0.012\) &                  \tnote{*} \\
T. Binary Trees + T. Cycles         &  \(0.761 \pm 0.064\) &  \(0.841 \pm 0.011\) &                  \tnote{*} \\

 Trees + Cycles                   &  \(0.506 \pm 0.088\) &  \(0.616 \pm 0.023\) &                  \tnote{*}  \\
 T. Trees + Cycles          &  \(0.776 \pm 0.061\) &  \(0.853 \pm 0.019\) &                  \tnote{*}  \\
 T. Trees + Cycles          &  \(0.749 \pm 0.056\) &  \(0.837 \pm 0.015\) &                  \tnote{*}  \\
 T. Trees + T. Cycles &  \(0.756 \pm 0.064\) &  \(0.842 \pm 0.015\) &                  \tnote{*}  \\
 \midrule
GCN \cite{kipf2017semisupervised, hu2020open} &  \(0.679 \pm 0.005\) & \(0.801 \pm 0.005\) & \(0.717 \pm 0.030\) \\
GAT (+FLAG) \cite{velickovic2018graph, kong2022robust} & \(0.725 \pm 0.007\) & \(0.830 \pm 0.005\) & \(0.737 \pm 0.001\) \\
SSP \cite{izadi2020optimization} &\(0.805 \pm 0.001\) & \(0.902 \pm 0.006\) & -\\
node2vec \cite{hu2020open} & - & - & \(0.701 \pm 0.001\) \\
SimTeG+TAPE+RevGAT \cite{duan2023simteg} & - & - & \(0.780 \pm 0.000\) \\

\bottomrule
\end{tabular}
    \begin{tablenotes}
    \item[*] Cycle embeddings run into a timeout after 48 hours.
    \end{tablenotes}
\end{threeparttable}
\end{table}
\begin{table}
    \centering
    \caption{Avg. weighted accuracy of node classification over all embeddings.}
    \label{tab:dgl_results}
    \begin{tabular}{>{\ttfamily}lcc}
\toprule
\textrm{Dataset} & DGL-CLUSTER & DGL-PATTERN \\
\textrm{Embedding} &  &  \\
\midrule
Binary Trees                & \(0.207 \pm 0.051\) & \(0.777 \pm 0.034\) \\
Tensor Binary Trees         & \(0.345 \pm 0.035\) & \(0.778 \pm 0.037\) \\
Binary Trees + Features     & \(0.209 \pm 0.033\) & \(0.777 \pm 0.011\) \\
Cycles                      & \(0.210 \pm 0.048\) & \(0.764 \pm 0.028\) \\
Tensor Cycles               & \(\mathbf{0.459 \pm 0.053}\) & \(0.765 \pm 0.013\) \\
Cycles + Features           & \(0.210 \pm 0.084\) & \(0.763 \pm 0.015\) \\
Trees                       & \(0.199 \pm 0.045\) & \(0.778 \pm 0.029\) \\
Tensor Trees                & \(0.351 \pm 0.018\) & \(0.780 \pm 0.024\) \\
Trees + Features            & \(0.208 \pm 0.057\) & \(0.778 \pm 0.028\) \\
Paths                       & \(0.209 \pm 0.018\) & \(0.772 \pm 0.007\) \\
Tensor Paths                & \(0.414 \pm 0.085\) & \(0.776 \pm 0.022\) \\
Paths + Features            & \(0.209 \pm 0.050\) & \(0.772 \pm 0.037\) \\
\midrule
Binary Trees + Cycles       & \(0.166 \pm 0.037\) & \(0.799 \pm 0.022\) \\
Binary Trees + T. Cycles    & \(0.444 \pm 0.095\) & \(\mathbf{0.800 \pm 0.017}\) \\
T. Binary Trees + Cycles    & \(0.348 \pm 0.040\) & \(0.793 \pm 0.034\) \\
T. Binary Trees + T. Cycles & \(0.426 \pm 0.070\) & \(0.798 \pm 0.047\) \\
Trees + Cycles              & \(0.167 \pm 0.097\) & \(0.795 \pm 0.024\) \\
Trees + T. Cycles           & \(0.430 \pm 0.036\) & \(0.799 \pm 0.013\) \\
T. Trees + Cycles           & \(0.353 \pm 0.062\) & \(0.790 \pm 0.040\) \\
T. Trees + T. Cycles        & \(0.413 \pm 0.041\) & \(0.798 \pm 0.021\) \\
\midrule
MLP (4 L.) \cite{dwivedi2023benchmarking}                  & \(0.210 \pm 0.000\) & \(0.505 \pm 0.000\) \\
Vanilla GCN (4 L.) \cite{dwivedi2023benchmarking}          & \(0.534 \pm 0.203\) & \(0.639 \pm 0.007\) \\
GCN (16 L.)   \cite{dwivedi2023benchmarking}               & \(0.690 \pm 0.137\) & \(0.856 \pm 0.003\) \\
GatedGCN (16 L.)  \cite{dwivedi2023benchmarking}           & \(0.738 \pm 0.033\) & \(0.856 \pm 0.009\) \\
GatedGCN+PE (16 L.)  \cite{dwivedi2023benchmarking}        & \(0.761 \pm 0.020\) & \(0.868 \pm 0.009\) \\
GCN + \(K_3, K_4,K_5\) \cite{barcelo2021graph}             & \(0.630 \pm 0.372\) & \(0.825 \pm 0.048\) \\
MoNet \cite{barcelo2021graph}                              & \(0.712 \pm 0.033\) & \(0.859 \pm 0.003\) \\
MoNet + \(K_3, K_4,K_5\)  \cite{barcelo2021graph}          & \(0.723 \pm 0.036\) & \(0.866 \pm 0.003\) \\
GatedGCN+PE + \(K_3, K_4,K_5\)  \cite{barcelo2021graph}    & \(0.740 \pm 0.019\) & \(0.856 \pm 0.033\) \\
RF on \(K_3, K_4,K_5\)                                     & \(0.143 \pm 0.001\) & \(0.334 \pm 0.005\) \\
\bottomrule
\end{tabular}
\end{table}

\begin{table}[ht]
    \centering
        \caption{Computation times of featureless embeddings per dataset in seconds (unless specified otherwise).}
    \begin{tabular}{>{\ttfamily}llllll}
        \toprule
        \textrm{Embedding / Dataset} & Citesser & Cora & OGBN-Arxiv & DGL-Pattern & DGL-Cluster \\
        \midrule
        Paths & 0.012 & 0.014 & 5.091 & 27.35 & 16.88\\
        Cycles & 0.532 & 1.508 & TIMEOUT & 135.1 & 92.4\\
        Binary Trees & 2.621 & 2.203  & 23 min &  6 h& 2 h\\
        Trees & 9.003 & 7.644 & \(\approx \) 40 h& 21 h & 6 h\\
        \bottomrule
    \end{tabular}

    \label{tab:run-times}
\end{table}

\pagebreak
\section{Results on Graph Embedding}
\label{apx:graph-tasks}

Here, we present some further results on experiments with homomorphism count based graph embeddings.
Due to these not constituting the main focus of our paper, we present them in the appendix to showcase that our models can easily be adapted for graph level embeddings.
We use an analogous setup as that used for node embeddings with the sole difference being that we do not compute rooted homomorphisms and that the classifier we use is an SVM instead of a random forest.
We provide an overview of the datasets used in \Cref{tab:datasets}.
 %
In \Cref{tab:ogbg-molhiv}, we present the performance of OGBG-Molhiv from the OGB project.
In \Cref{tab:comparison_graph}, we compare the performance of our embeddings on examples from the TUdataset \cite{morris2020tudataset} with the performance of multiple Weisfeiler-Leman kernels obtained in \cite{nikolentzos2021graph}.

\begin{table}[h]
    \caption{Datasets used for graph classification tasks.}
    \centering
    \begin{tabular}{lccccc}\toprule
        Dataset & Graphs & Classes & \(\varnothing\)Num. Nodes & \(\varnothing\)Num. Edges & Node Features \\ \midrule
        DD \cite{morris2020tudataset} & 1178 & 2 & 284.32 & 715.66 & Yes \\
        ENZYMES \cite{morris2020tudataset} & 600 & 6 & 32.63 &  62.14 & Yes \\
        PROTEINS \cite{morris2020tudataset} & 1113 & 2 & 39.06 & 72.82 & Yes \\
        MUTAG \cite{morris2020tudataset} & 188 & 2 & 17.93 & 19.79 & Yes \\
        NC I1 \cite{morris2020tudataset} & 4110 & 2 & 29.87 & 32.30 & Yes \\
        IMDB-Binary \cite{morris2020tudataset} & 1000 & 2 & 19.77 & 96.53 & No \\
        IMDB-Multi \cite{morris2020tudataset} & 1500 & 3 & 13.00 & 65.94 & No \\
        OGBG-Molhiv \cite{hu2020open} & 41127 & 2 & 25.50 & 27.50 & Yes \\ \bottomrule
    \end{tabular}
    \label{tab:datasets}
\end{table}

\begin{table}[hb!]
    \caption{Results of Experiments done on OGBG-Molhiv.}
    \centering
    \begin{tabular}{>{\ttfamily\small}lc} \toprule
        \textrm{Dataset} & OGBG-Molhiv \\
        \textrm{Embedding} &  \\ \midrule
        Cycles & \(0.705 \pm 0.00\) \\
        Tensor Cycles & \(\mathbf{0.747 \pm 0.01}\) \\
        Paths &\(0.710 \pm 0.01\) \\
        Tensor Paths & \(0.732 \pm 0.01\) \\       
        Trees & \(0.686 \pm 0.01\) \\
        Tensor Trees & \(0.738 \pm 0.01\) \\ \midrule
        Cycles + Paths & \(0.718 \pm 0.01\) \\
        T. Cycles + T. Paths & \(0.741 \pm 0.01\) \\
        Cycles + Trees & \(0.710 \pm 0.01\) \\
        T. Cycles + T. Trees & \(0.743 \pm 0.01\) \\ \midrule 
        GCN \cite{dwivedi2023benchmarking} & \(0.759 \pm 0.12\) \\
        GIN \cite{dwivedi2023benchmarking} & \(0.771 \pm 0.15\) \\
        GCN+hom+F \cite{welke2023expectation} & \(0.788 \pm 0.01\)\\
        GIN+hom+F \cite{welke2023expectation} & \(0.777 \pm 0.02 \)\\
        \bottomrule
    \end{tabular}
    
    \label{tab:ogbg-molhiv}
\end{table}

\begin{table}[ht!]
    \centering
    \caption{Comparing our homomorphism count based graph embeddings with multiple Weisfeiler-Leman Kernels \cite{nikolentzos2021graph} on graph classification.}
    \begin{adjustbox}{angle=90}
    \begin{tabular}{>{\ttfamily\small}lccccccc} \toprule
         \textrm{Dataset} & DD & ENZYMES & MUTAG & NCI1 & PROTEINS & IMDB-BINARY & IMDB-MULTI \\
\textrm{Embedding / Kernel} & & & & & & & \\\midrule
Paths & \(0.766 \pm 0.03\) & \(0.265 \pm 0.05\) & \(0.670 \pm 0.02\) & \(0.744 \pm 0.04\) & \(0.888 \pm 0.06\) & \(\mathbf{0.719 \pm 0.04}\) & \(0.494 \pm 0.04\) \\
Tensor Paths & \(0.787 \pm 0.03\) & \(\mathbf{0.422 \pm 0.06}\) & \(0.790 \pm 0.02\) & \(0.757 \pm 0.04\) & \(0.889 \pm 0.06\) & - & - \\
Binary Trees & \(0.767 \pm 0.03\) & \(0.266 \pm 0.05\) & \(0.716 \pm 0.02\) & \(0.744 \pm 0.04\) & \(0.888 \pm 0.06\) & \(0.717 \pm 0.04\) & \(0.493 \pm 0.03\) \\
Tensor Binary Trees & \(0.787 \pm 0.03\) & \(0.412 \pm 0.07\) & \(0.752 \pm 0.02\) & \(0.761 \pm 0.04\) & \(0.884 \pm 0.06\) & - & - \\
Trees & \(0.766 \pm 0.03\) & \(0.277 \pm 0.05\) & \(0.701 \pm 0.02\) & \(0.744 \pm 0.04\) & \(0.890 \pm 0.06\) & \(0.717 \pm 0.04\) & \(\mathbf{0.495 \pm 0.03}\) \\
Tensor Trees & \(0.787 \pm 0.03\) & \(0.412 \pm 0.07\) & \(0.755 \pm 0.02\) & \(\mathbf{0.765 \pm 0.04}\) &\(0.892 \pm 0.06\) & - & - \\
Cycles & \(0.768 \pm 0.03\) & \(0.311 \pm 0.05\) & \(0.712 \pm 0.02\) & \(0.747 \pm 0.040\) & \(0.888 \pm 0.06\) & \(0.713 \pm 0.05\) & \(0.492 \pm 0.03\) \\
Tensor Cycles & \(\mathbf{0.788 \pm 0.03}\) & \(0.414 \pm 0.06\) & \(\mathbf{0.804 \pm 0.02}\) & \(0.771 \pm 0.04\) & \(\mathbf{0.895 \pm 0.06}\) & - & - \\ 
\midrule
Cycles + Paths & \(0.768 \pm 0.03\) & \(0.344 \pm 0.06\) & \(0.727 \pm 0.02\) & \(0.744 \pm 0.04\) & \(0.888 \pm 0.06\) & \(0.722 \pm 0.04\) & \(0.491 \pm 0.04\) \\
W. Cycles + W. Paths & \(0.788 \pm 0.03\) & \(0.469 \pm 0.06\) & \(0.758 \pm 0.02\) & \(0.771 \pm 0.04\) & \(0.897 \pm 0.07\) & - & - \\
Cycles + Trees & \(0.767 \pm 0.03\) & \(0.351 \pm 0.05\) & \(0.736 \pm 0.02\) & \(0.745 \pm 0.04\) & \(0.888 \pm 0.06\) & \(0.722 \pm 0.04\) & \(0.486 \pm 0.03\) \\
T. Cycles + T. Trees & \(0.789 \pm 0.03\) & \(0.463 \pm 0.07\) & \(0.760 \pm 0.02\) & \(0.773 \pm 0.04\) & \(0.891 \pm 0.06\) & - & - \\ 
\midrule
WL-PM & OUT-OF-MEM & \(0.577 \pm 0.08\) & \(0.886 \pm 0.01\) & \(0.853 \pm 0.04\) & \(0.756 \pm 0.05\) & - & - \\
WL-SP & \(0.757 \pm 0.04\) & \(0.282 \pm 0.10\) & \(0.823 \pm 0.02\) & \(0.614 \pm 0.03\) & \(0.719 \pm 0.02\) & \(0.559 \pm 0.12\) & \(0.396 \pm 0.07\) \\
WL-VH & \(0.789 \pm 0.05\) & \(0.532 \pm 0.12\) & \(0.840 \pm 0.01\) & \(0.850 \pm 0.02\) & \(0.755 \pm 0.03\) & \(0.725 \pm 0.05\) & \(0.508 \pm 0.03\) \\
SP & \(0.789 \pm 0.05\) & \(0.401 \pm 0.13\) & \(0.825 \pm 0.01\) & \(0.723 \pm 0.03\) & \(0.759 \pm 0.04\) & \(0.552 \pm 0.12\) & \(0.394 \pm 0.08\) \\
GHC-Tree \cite{nguyen2020graph} & - & - & \(0.893 \pm  0.08 \)  & - & - & \(0.721 \pm 0.03 \)  & \(0.486 \pm 0.04\) \\
GHC-Cycle \cite{nguyen2020graph} & - & - & \(0.878 \pm  0.07 \)  & - & - & \(0.709 \pm 0.05 \)  & \(0.474 \pm 0.04\) \\
\bottomrule
    \end{tabular}
    \end{adjustbox}
    \label{tab:comparison_graph}
\end{table}

\end{document}